\newtheorem{theorem}{Theorem}[section]
\newcommand{\VI}{\text{VI}}
\newcommand{\KL}{\text{KL}}
\newcommand{\MI}{\text{I}}
\newcommand{\Enpy}{\text{H}}
\newcommand{\bbE}{\mathbb{E}}
\newcommand{\calL}{\mathcal{L}}
\def\1{\bm{1}}
\def\vc{{\bm{c}}}
\def\vs{{\bm{s}}}
\def\vx{{\bm{x}}}
\def\vy{{\bm{y}}}
\def\vz{{\bm{z}}}
\DeclareMathAlphabet{\mathsfit}{\encodingdefault}{\sfdefault}{m}{sl}
\SetMathAlphabet{\mathsfit}{bold}{\encodingdefault}{\sfdefault}{bx}{n}
\title{ {I}mproving {D}isentangled {T}ext {R}epresentation {L}earning \smallskip \\
with {I}nformation-{T}heoretic {G}uidance}
\author{Pengyu Cheng$^\mathbf{1}\thanks{~~This work was conducted while the first author was doing an internship at NEC Labs America.}$, ~Martin Renqiang Min$^\mathbf{2}$, ~Dinghan Shen$^\mathbf{3}$, ~Christopher Malon$^\mathbf{2}$,\\
\textbf{Yizhe Zhang$^\mathbf{4}$, ~Yitong Li$^\mathbf{1}$, ~Lawrence Carin$^\mathbf{1}$}
 \\
 	$^{\mathbf{1}}$Duke University~~
	$^{\mathbf{2}}$NEC Labs America\\
	$^{\mathbf{3}}$Microsoft Dynamics 365 AI~~
	$^{\mathbf{4}}$Microsoft Research\\
  \texttt{ pengyu.cheng@duke.edu}}
\date{}
\begin{document}

\maketitle
\begin{abstract}
Learning disentangled 
representations of natural language is essential for many NLP tasks, \textit{e.g.}, conditional text generation, style transfer, personalized dialogue systems, \textit{etc}. Similar problems have been studied extensively for other forms of data, such as images and videos.  However, the discrete nature of natural language makes the disentangling of textual representations more challenging (\textit{e.g.}, the manipulation over the data space cannot be easily achieved). Inspired by information theory, we propose a novel method that effectively manifests disentangled representations of text, without any supervision on semantics. A new mutual information upper bound is derived and leveraged to measure dependence between style and content. By minimizing this upper bound, the proposed method induces style and content embeddings into two independent low-dimensional spaces. Experiments on both conditional text generation and text-style transfer demonstrate the high quality of our disentangled representation in terms of content and style preservation.
\end{abstract}

\section{Introduction}

 Disentangled representation learning (DRL), which maps different aspects of data into distinct and independent low-dimensional latent vector spaces, has attracted considerable attention for making deep learning models more interpretable.  Through a series of operations such as selecting, combining, and switching, the learned disentangled representations can be utilized for downstream tasks, such as domain adaptation~\citep{liu2018detach}, style transfer~\citep{lee2018diverse}, conditional generation~\citep{denton2017unsupervised,burgess2018understanding}, and few-shot learning~\citep{kumar2018generalized}. Although widely used in various domains, such as images~\citep{tran2017disentangled,lee2018diverse}, videos~\citep{yingzhen2018disentangled,hsieh2018learning}, and speech~\citep{chou2018multi,zhou2019talking}, many challenges in DRL have received limited exploration in natural language processing~\citep{john2018disentangled}.
 
 To disentangle various attributes of text, two distinct types of embeddings are typically considered: the \textit{style embedding} and the  \textit{content embedding}~\citep{john2018disentangled}. The content embedding is designed to encapsulate the semantic meaning of a sentence. In contrast, the style embedding should represent desired attributes, such as the sentiment of a review, or the personality associated with a post. Ideally, a disentangled-text-representation model should learn representative embeddings for both style and content.

To accomplish this, several strategies have been introduced. \citet{shen2017style} proposed to learn a semantically-meaningful content embedding space by matching the content embedding from two different style domains. 
However, their method requires predefined style domains, and thus cannot automatically infer style information from unlabeled text. 
\citet{hu2017toward} and \citet{lample2018multipleattribute} utilized one-hot vectors as style-related features (instead of inferring the style embeddings from the original data). These models are not applicable when new data comes from an unseen style class. \citet{john2018disentangled} proposed an encoder-decoder model in combination with an adversarial training objective to infer both style and content embeddings from the original data. However, their adversarial training framework requires manually-processed supervised information for content embeddings (\textit{e.g.}, reconstructing sentences with manually-chosen sentiment-related words removed). Further, there is no theoretical guarantee for the quality of disentanglement.

In this paper, we introduce a novel \textbf{I}nformation-theoretic \textbf{D}isentangled \textbf{E}mbedding \textbf{L}earning method (IDEL) for text, based on guidance from information theory. 
 Inspired by Variation of Information (VI), we introduce a novel information-theoretic objective to measure how well the learned representations are disentangled. Specifically, our IDEL reduces the dependency between style and content embeddings by minimizing a sample-based mutual information upper bound. Furthermore, the mutual information between latent embeddings and the input data is also maximized to ensure the representativeness of the latent embeddings (\textit{i.e.}, style and content embeddings).
The contributions of this paper are summarized as follows:
\vspace{-1.mm}
\begin{itemize}

   \item A principled framework is introduced to learn disentangled representations of natural language. By minimizing a novel VI-based DRL objective, our model not only explicitly reduces the correlation between style and content embeddings, but also simultaneously preserves the sentence information in the latent spaces.
  
\vspace{-1.mm}
   \item A general sample-based mutual information upper bound is derived to facilitate the minimization of our VI-based objective. With this new upper bound, the dependency of style and content embeddings can be decreased effectively and stably.
   
\vspace{-1.mm}
    
    \item The proposed model is evaluated empirically relative to other disentangled representation learning methods. Our model exhibits competitive results in several real-world applications.
\end{itemize}


\vspace{-3.mm}
\section{Preliminary}
\subsection{Mutual Information Variational Bounds}
Mutual information (MI) is a key concept in information theory, for measuring the dependence between two random variables. Given two random variables $\vx$ and $\vy$, their MI is defined as 
\begin{equation}\label{eq:mi-definition}
    \MI(\vx; \vy) = \bbE_{p(\vx, \vy)} [\log \frac{p(\vx, \vy)}{p(\vx) p(\vy)} ],
\end{equation}
where $p(\vx, \vy)$ is the joint distribution of the random variables, with $p(\vx)$ and $p(\vy)$ representing the respective marginal distributions. 

In disentangled representation learning, a common goal is to minimize the MI between different types of embeddings~\citep{poole2019variational}. However, the exact  MI value is difficult to calculate in practice, because in most cases the integral in Eq.~\eqref{eq:mi-definition} is intractable. To address this problem, various MI estimation methods have been  introduced~\citep{chen2016infogan,belghazi2018mutual,poole2019variational}. One of the commonly used estimation approaches is the Barber-Agakov lower bound~\citep{barber2003algorithm}. By introducing a variational distribution $q(\vx| \vy)$, one may derive
\begin{equation}\label{eq:MI_variation_lowerbound} \textstyle
    \MI(\vx ; \vy) \geq \Enpy(\vx) + \bbE_{p(\vx,\vy)} [ \log q(\vx| \vy)],
\end{equation}
where $\Enpy(\vx)= \bbE_{p(\vx)}[-\log p(\vx)]$ is the entropy of variable $\vx$.
\subsection{Variation of Information}\label{sec:vi}
In information theory, Variation of Information (VI, also called Shared Information Distance) is a measure of independence between two random variables. The mathematical definition of VI between random variables $\vx$ and $\vy$ is
\begin{equation}\label{eq:vi_define}
\begin{aligned}\textstyle
    \VI(\vx; \vy)  = \Enpy(\vx) + \Enpy (\vy) - 2\MI(\vx; \vy), 
\end{aligned}
\end{equation}
where $\Enpy(\vx)$ and $ \Enpy(\vy)$ are entropies of $\vx$ and $\vy$, respectively (shown in Figure~\ref{fig:vi_def}).
\citet{kraskov2005hierarchical} show that VI is a well-defined metric, which satisfies the triangle inequality:
\begin{equation}\label{eq:triangle_ineq} \textstyle
    \VI(\vy; \vx) + \VI(\vx;  \vz) \geq \VI(\vy; \vz),
\end{equation}
for any random variables $\vx$, $\vy$ and $\vz$. Additionally, $\VI(\vx ;\vy) = 0$ indicates $\vx$ and $\vy$ are  the same variable~\citep{meilua2007comparing}.
From Eq.~\eqref{eq:vi_define}, the VI distance has a close relation to mutual information: if the mutual information is a measure of ``dependence'' between two variables, then the VI distance is a measure of ``independence'' between them.  

\begin{figure}[t]
    \centering
    \includegraphics[width=0.7\columnwidth]{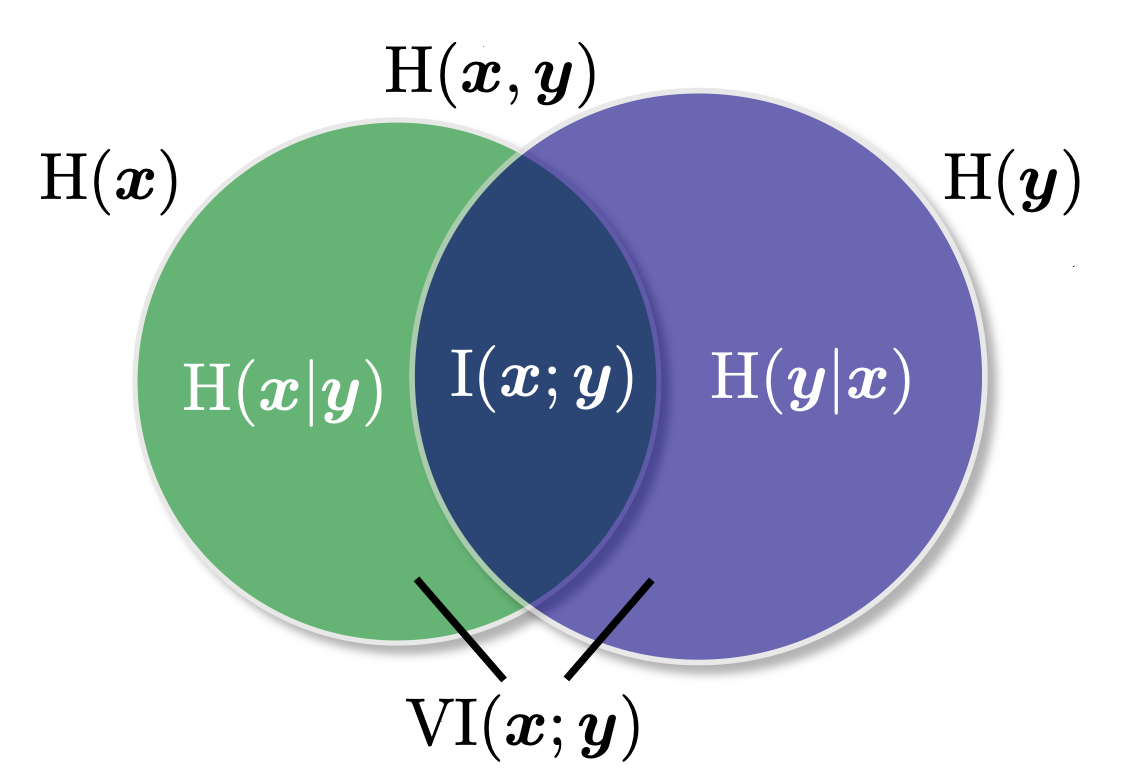}
    \caption{The green and purple circles represent the entropy of $\vx$ and $\vy$, respectively. The intersection (blue region) is the mutual information between $\vx$ and $\vy$. The symmetric difference of the two circles (green and purple regions) is $\VI (\vx; \vy)$.}
    \label{fig:vi_def}
\end{figure}

\section{Method}
Consider data $\{(\vx_i, y_i)\}_{i = 1}^N$, where each $\vx_i$ is a sentence drawn from a distribution $p(\vx)$, and $y_i$ is the label indicating the style of $\vx_i$.  The goal is to encode each sentence $\vx_i$ into its corresponding style embedding $\vs_i$ and content embedding $\vc_i$ with an encoder $q_\theta(\vs, \vc | \vx)$:
\begin{equation}\label{eq:encoder}\textstyle
  \vs_i,\vc_i| \vx_i \sim q_\theta(\vs, \vc| \vx_i).  
\end{equation}
The collection of style embeddings $\{\vs_i\}_{i=1}^N$ can be regarded as samples drawn from a variable $\vs$ in the style embedding space, while the collection of content embeddings $\{\vc_i \}_{i=1}^N$ are samples from a variable $\vc$ in the content embedding space. 
In practice, the dimension of the content embedding is typically higher than that of the style embedding, considering that the content usually contains more information than the style~\citep{john2018disentangled}.

We first give an intuitive introduction to our proposed VI-based objective, then in Section~\ref{sec:justification} we provide the theoretical justification for it.
To disentangle the style and content embedding, we try to minimize the mutual information $\MI (\vs; \vc)$ between $\vs$ and $\vc$. 
 Meanwhile, we maximize $\MI(\vc; \vx)$ to ensure that the content embedding $\vs$  sufficiently encapsulates information from the sentence $\vx$. 
  The embedding $\vs$ is expected to contain rich style information. Therefore, the mutual information $\MI(\vs; y)$ should be maximized. Thus, our overall disentangled representation learning objective is:
    $ \calL_{\text{Dis}} = \MI(\vs; \vc) - \MI ( \vc; \vx)  -  \MI(\vs; y).$



\subsection{Theoretical Justification of the Objective}\label{sec:justification} 
The objective $\calL_{\text{Dis}}$ has a strong connection with the independence measurement in information theory. As described in Section~\ref{sec:vi}, Variation of Information (VI) is a well-defined metric of independence between variables. Applying the triangle inequality from Eq.~\eqref{eq:triangle_ineq} to $\vs$, $\vc$ and $\vx$, we have $\VI(\vs; \vx) + \VI(\vx; \vc) \geq \VI(\vs; \vc).$
Equality occurs if and only if the information from variable $\vx$ is totally separated into two independent variable $\vs$ and $\vc$, which is an ideal scenario for disentangling sentence $\vx$ into its corresponding style embedding $\vs$ and content embedding $\vc$.

Therefore, the difference between $\VI(\vs; \vx) + \VI(\vx; \vc)$ and $\VI(\vs; \vc)$ represents the degree of disentanglement. Hence we introduce a measurement:
\begin{equation*} \textstyle
    {D}(\vx ; \vs, \vc) = \VI(\vs;\vx) + \VI(\vx; \vc) - \VI(\vc ; \vs).
\end{equation*}
From Eq.~\eqref{eq:triangle_ineq}, we know that $\text{D}(\vx ; \vy, \vz)$ is always non-negative.
By the definition of VI in Eq.~\eqref{eq:vi_define}, ${D}(\vx; \vs,\vc)$ can be simplified as:
\begin{equation*}\textstyle
\begin{aligned}
       & \VI(\vc ;\vx) + \VI(\vx; \vs) - \VI(\vs; \vc) \\
     = & 2 \Enpy(\vx) + 2[ \MI(\vs; \vc) - \MI(\vx; \vc) - \MI(\vx; \vs) ].
\end{aligned}
\end{equation*}
Since $\Enpy(\vx)$ is a constant associated with the data, we only need to focus on $ \MI(\vs; \vc) - \MI(\vx; \vc) - \MI(\vx; \vs)$. 

The measurement $D(\vx ; \vs, \vc)$ is symmetric to style $\vs$ and content $\vc$, giving rise to the problem that without any inductive bias in supervision, the disentangled representation could be meaningless (as observed by~\citet{locatello2019challenging}). Therefore, we add inductive biases by utilizing the style label $y$ as supervised information for style embedding $\vs$. 
Noting that $\vs \rightarrow \vx \rightarrow y$ is a Markov Chain, we have $\MI(\vs; \vx) \geq \MI(\vs; y)$ based on the MI data-processing inequality~\citep{cover2012elements}.
Then we convert the minimization of $ \MI(\vs; \vc) - \MI(\vx; \vc) - \MI(\vx; \vs)$ into the minimization of the upper bound $\MI(\vs; \vc) - \MI(\vx; \vc) - \MI(y; \vs)$, which further leads to our objective $\calL_{\text{Dis}}$. 

However, minimizing the exact value of mutual information in the objective $\calL_{\text{Dis}}$ causes numerical instabilities, especially when the dimension of the latent embeddings is large~\citep{chen2016infogan}. Therefore, we provide several MI estimations to the objective terms $\MI(\vs ; \vc)$, $\MI(\vx; \vc)$ and $\MI(\vs; y)$ in the following two sections.

\subsection{MI Variational Lower Bound}

To maximize $\MI(\vx; \vc)$ and $\MI(\vs; y)$, we derive two variational lower bounds. For $\MI(\vx; \vc)$, we introduce a variational decoder $q_\phi(\vx| \vc)$ to reconstruct the sentence $\vx$  by the content embedding $\vc$. Leveraging the MI variational lower bound from Eq.~\eqref{eq:MI_variation_lowerbound}, we have $\MI(\vx; \vc) \geq \Enpy(\vx)  + \bbE_{p(\vx; \vc)} [\log q_\phi(\vx | \vc)].$
%
Similarly, for $\MI(\vs; y)$, another variational lower bound  can be obtained as: $\MI(\vs; y) \geq \Enpy(y) + \bbE_{p(y, \vs)}[\log q_\psi(y|\vs)]$, where $q_\psi(y|\vs)$ is a classifier mapping the style embedding $\vs$ to its corresponding style label $y$.
Based on these two lower bounds, $\calL_{\text{Dis}}$ has an upper bound:
\begin{align}
\textstyle
    \calL_{\text{Dis}} \leq \MI(\vs; \vc)& -[\Enpy(\vx) + \bbE_{p(\vx, \vc)}[\log q_\phi(\vx|\vc)]] \nonumber \\
\textstyle    -&[\Enpy(y) + \bbE_{p(y, \vs)}[\log q_\psi(y|\vs)]]. \label{eq:lower-bound-in-obj}
\end{align}
Noting that both $\Enpy(\vx)$ and $\Enpy(y)$ are constants from the data, we only need to minimize: 
\begin{align}
\textstyle    \bar{\calL}_{\text{Dis}} = \MI(\vs; \vc) - &\bbE_{p(\vx, \vc)}[\log q_\phi(\vx|\vc)]\nonumber\\ 
\textstyle   -& \bbE_{p(y, \vs)}[\log q_\psi(y|\vs)]. \label{eq:bar_L_dis}
\end{align}
As an intuitive explanation of $\bar{\calL}_{\text{Dis}}$, the style embedding $\vs$ and content embedding $\vc$ are expected to be independent by minimizing mutual information $\MI(\vs; \vc)$, while they also need to be representative: the style embedding $\vs$ is encouraged to give a better prediction of style label $y$ by maximizing $\bbE_{p(y, \vs)}[\log q_\psi(y|\vs)]$; the content embedding should maximize the log-likelihood $ \bbE_{p(\vx, \vc)}[\log q_\phi(\vx|\vc)]$ to contain sufficient information from sentence $\vx$.

\subsection{MI Sample-based Upper Bound}\label{sec:upper_bound}
To estimate $\MI(\vs; \vc)$, we propose a novel sample-based upper bound.
Assume we have $M$ latent embedding pairs $\{(\vs_j, \vc_j)\}_{j=1}^M$ drawn from $p(\vs, \vc)$. As shown in Theorem~\ref{thm:upper-bound}, we derive an upper bound of mutual information based on the samples. A detailed proof is provided in the Supplementary Material.
\begin{theorem}\label{thm:upper-bound}
If $\{(\vs_j, \vc_j)\}_{j=1}^M \sim p(\vs,\vc)$, then
\begin{equation}\label{eq:mi-upper-bound}
\textstyle \mathrm{I}(\vs; \vc) \leq \bbE [ \frac{1}{M} \sum_{j = 1}^M R_j ] = : \hat{\mathrm{I}}(\vs; \vc), 
\end{equation}
where
$R_j = \log p(\vs_j | \vc_j) - \frac{1}{M} \sum_{k = 1}^M \log p(\vs_j|\vc_k) $.
\end{theorem}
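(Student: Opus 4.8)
The plan is to evaluate the expectation $\hat{\mathrm{I}}(\vs;\vc)=\bbE\big[\tfrac{1}{M}\sum_{j=1}^M R_j\big]$ in closed form, recognize it as the contrastive log-ratio upper bound, and then close the gap to $\mathrm{I}(\vs;\vc)$ with Gibbs' inequality. First, since the $M$ pairs are i.i.d., every summand has the same expectation, so $\hat{\mathrm{I}}(\vs;\vc)=\bbE[R_1]$. I would expand $R_1=\log p(\vs_1|\vc_1)-\tfrac{1}{M}\sum_{k=1}^M\log p(\vs_1|\vc_k)$ and split the inner average into the diagonal term ($k=1$) and the $M-1$ off-diagonal terms. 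The diagonal term has expectation $\bbE_{p(\vs,\vc)}[\log p(\vs|\vc)]$; for $k\neq1$ the sample $\vc_k$ is independent of $(\vs_1,\vc_1)$, so $(\vs_1,\vc_k)\sim p(\vs)p(\vc)$ and each such term has expectation $\bbE_{p(\vs)p(\vc)}[\log p(\vs|\vc)]$. Collecting terms shows $\hat{\mathrm{I}}(\vs;\vc)$ equals
\begin{equation*}\textstyle
\mathrm{I}_{\mathrm{CLUB}}(\vs;\vc):=\bbE_{p(\vs,\vc)}[\log p(\vs|\vc)]-\bbE_{p(\vs)p(\vc)}[\log p(\vs|\vc)]
\end{equation*}
(exactly if the inner sum is read as $\tfrac{1}{M-1}\sum_{k\neq j}$, and up to the innocuous factor $\tfrac{M-1}{M}$ under the literal normalization).

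Second, I would prove $\mathrm{I}(\vs;\vc)\le\mathrm{I}_{\mathrm{CLUB}}(\vs;\vc)$. Rewriting the mutual information as $\mathrm{I}(\vs;\vc)=\bbE_{p(\vs,\vc)}[\log p(\vs|\vc)]-\bbE_{p(\vs)}[\log p(\vs)]$ and subtracting, the $\bbE_{p(\vs,\vc)}[\log p(\vs|\vc)]$ terms cancel and the slack becomes
\begin{equation*}\textstyle
\mathrm{I}_{\mathrm{CLUB}}(\vs;\vc)-\mathrm{I}(\vs;\vc)=\bbE_{p(\vc)}\Big[\bbE_{p(\vs)}\big[\log\tfrac{p(\vs)}{p(\vs|\vc)}\big]\Big]=\bbE_{p(\vc)}\big[\KL\big(p(\vs)\,\|\,p(\vs|\vc)\big)\big]\ \ge\ 0,
\end{equation*}
because for each fixed $\vc$ the bracketed quantity is a KL divergence between two distributions over $\vs$. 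Chaining the two steps yields $\mathrm{I}(\vs;\vc)\le\hat{\mathrm{I}}(\vs;\vc)$.

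Steps 1 and 2 are each short once set up: the only bookkeeping is tracking which of the $M^2$ index pairs $(j,k)$ correspond to genuinely paired samples versus independent ones. I expect the main obstacle to be conceptual rather than computational — noticing that $\bbE[\tfrac1M\sum_j R_j]$ is \emph{not} $\mathrm{I}(\vs;\vc)$ itself but the strictly larger CLUB functional, and that the difference is an expected KL term and hence non-negative. A minor technical point to state up front is integrability: one assumes $\bbE_{p(\vs,\vc)}|\log p(\vs|\vc)|$ and $\bbE_{p(\vs)p(\vc)}|\log p(\vs|\vc)|$ are finite, which is what lets us split the expectations and apply Fubini, and which holds automatically in the discrete case whenever $\mathrm{I}(\vs;\vc)<\infty$.
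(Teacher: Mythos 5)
Your proof follows essentially the same route as the paper's: both reduce the claim to the population-level inequality $\MI(\vs;\vc)\le \bbE_{p(\vs,\vc)}[\log p(\vs|\vc)]-\bbE_{p(\vs)p(\vc)}[\log p(\vs|\vc)]$ and then identify $\bbE[\tfrac1M\sum_j R_j]$ with the right-hand side. Your way of closing the gap --- rewriting it as $\bbE_{p(\vc)}\big[\KL\big(p(\vs)\Vert p(\vs|\vc)\big)\big]\ge 0$ --- is interchangeable with the paper's, which after the same cancellation applies Jensen's inequality to $\log\bbE_{p(\vc)}[p(\vs|\vc)]=\log p(\vs)$; KL non-negativity is itself Jensen, so this is a cosmetic difference.

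The one substantive divergence is your bookkeeping of the diagonal index pairs, and it exposes the weak point of both arguments rather than resolving it. The paper's proof writes every one of the $M^2$ cross terms as $\bbE_{\vs_j\sim p(\vs)}\bbE_{\vc_k\sim p(\vc)}[\log p(\vs_j|\vc_k)]$, including $k=j$, even though $(\vs_j,\vc_j)$ is a joint sample; that silent misidentification is precisely what makes its ``unbiased estimation'' step come out exact. Your correct accounting gives $\bbE[\tfrac1M\sum_j R_j]=\tfrac{M-1}{M}\,\mathrm{I}_{\mathrm{CLUB}}(\vs;\vc)$, and the factor $\tfrac{M-1}{M}$ is \emph{not} innocuous for the theorem as literally stated: $\tfrac{M-1}{M}\mathrm{I}_{\mathrm{CLUB}}\ge\MI(\vs;\vc)$ is equivalent to $(M-1)\,\bbE_{p(\vc)}[\KL(p(\vs)\Vert p(\vs|\vc))]\ge\MI(\vs;\vc)$, and since the gap is the \emph{reversed} KL while $\MI(\vs;\vc)=\bbE_{p(\vc)}[\KL(p(\vs|\vc)\Vert p(\vs))]$ is the forward one, neither dominates the other in general, so this does not follow from non-negativity of the gap alone. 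The clean fix is the one you already name in passing: restate $R_j$ with the inner average over $k\ne j$ normalized by $M-1$, under which your argument (given the integrability hypotheses you state, which the paper omits) is complete. As written, your proof is more careful than the paper's but inherits --- and, to your credit, explicitly surfaces --- the same unresolved step, which you should not dismiss as innocuous.
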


Based on Theorem~\ref{thm:upper-bound}, given embedding samples $\{\vs_j, \vc_j \}_{j=1}^M$, we can minimize $\frac{1}{M} \sum_{j=1}^M R_j$ as an unbiased estimation of the upper bound $\hat{\MI}(\vs; \vc)$. The calculation of $R_{j}$ requires the conditional distribution $p(\vs| \vc)$, whose closed form is unknown. Therefore, we use a variational network $p_\sigma (\vs| \vc)$ to approximate $p(\vs| \vc)$ with embedding samples.


\begin{algorithm}[!t]
\small
\SetAlgoLined
\KwIn{Data $\{ \vx_j \}_{j=1}^M$, encoder $q_\theta(\vs, \vc | \vx)$, approximation network $p_\sigma(\vs| \vc)$.}
\For{each training iteration}{
 Sample $\{\vs_j, \vc_j\}_{j=1}^M$ from $q_\theta(\vs,\vc| \vx)$\;
  $\calL(\sigma) = \frac{1}{M} \sum_{j=1}^M \log p_\sigma(\vs_j | \vc_j)$\;
 Update $p_\sigma(\vs| \vc)$ by maximize $\calL(\sigma)$\;
 \For{$j = 1$ \KwTo $M$}{
  Sample $k'$ uniformly from $\{1,2,\dots,M \}$\;
  $\hat{R}_j = \log p_\sigma(\vs_j | \vc_j) - \log p_\sigma( \vs_j| \vc_{k'}) $\;}
 Update $q_\theta(\vs, \vc| \vx)$ by minimize $\frac{1}{M} \sum_{j=1}^M \hat{R}_j$\;
 }
 \caption{Disentangling $\vs$ and $\vc$}

 \end{algorithm}

To implement the upper bound in Eq.~\eqref{eq:mi-upper-bound}, we first feed $M$ sentences $\{ \vx_j\}$ into encoder $q_\theta(\vs, \vc| \vx)$ to  obtain embedding pairs $\{(\vs_j, \vc_j)\}$. Then, we train the variational distribution $p_\sigma(\vs | \vc)$ by maximizing the log-likelihood $\calL(\sigma) = \frac{1}{M} \sum_{j=1}^M \log p_\sigma (\vs_j | \vc_j)$. After the training of $p_\sigma(\vs| \vc)$ is finished, we calculate $R_j$ for each embedding pair $(\vs_j , \vc_j)$. Finally, the gradient for $\frac{1}{M} \sum_{j=1}^M R_j$ is calculated and back-propagated to encoder $q_\theta(\vs, \vc| \vx)$.  We apply the re-parameterization trick~\citep{kingma2013auto} to ensure the gradient back-propagates through the sampled embeddings $(\vs_j, \vc_j)$. When the encoder weights are updated, the distribution $q_\theta(\vs, \vc| \vx)$ changes, which leads to the changing of conditional distribution $p(\vs | \vc)$. Therefore, we need to update the approximation network $p_\sigma(\vs| \vc)$ again.
Consequently, the encoder network $q_\theta(\vs, \vc| \vx)$ and the approximation network $p_\sigma(\vs| \vc)$ are updated alternately during training.

In each training step, the above algorithm requires $M$ pairs of embedding samples $\{ \vs_j, \vc_j\}_{j=1}^M$ and the calculation of all  conditional distributions $p_\sigma(\vs_j| \vc_k)$. This leads to $\mathcal{O}(M^2)$ computational complexity. To accelerate the training, we further approximate term $\frac{1}{M} \sum_{k = 1}^M \log p(\vs_j|\vc_k)$ in $R_j$ by $\log p(\vs_j | \vc_{k'})$, where $k'$ is selected uniformly from indices $\{ 1, 2, \dots, M \}$. This stochastic sampling not only leads to an unbiased estimation $\hat{R}_j$ to $R_j$, but also improves the model robustness (as shown in Algorithm 1).

Symmetrically, we can also derive an MI upper bound based on the conditional distribution $p(\vc | \vs)$. However, the dimension of $\vc$ is much higher than the dimension of $\vs$, which indicates that
the neural approximation to $p(\vc | \vs)$ would have worse performance compared with the approximation to $p(\vs| \vc)$. Alternatively, the lower-dimensional distribution $p(\vs| \vc)$ used in our model is relatively easy to approximate with neural networks.

\subsection{Encoder-Decoder Framework}

One important downstream task for disentangled representation learning (DRL) is conditional generation. Our MI-based text DRL method can be also embedded into an Encoder-Decoder generative model and trained end-to-end.
 
Since the proposed DRL encoder $q_\theta (\vs, \vc | \vx)$ is a stochastic neural network, a natural extension is to add a decoder to build a variational autoencoder (VAE)~\citep{kingma2013auto}. Therefore, we introduce another decoder network $p_\gamma(\vx| \vs, \vc)$ that generates a new sentence based on the given style $\vs$ and content $\vc$.  A prior distribution $p(\vs, \vc)$ = $p(\vs)p(\vc)$, as the product of two multivariate unit-variance Gaussians, is used to regularize the posterior distribution $q_\theta ( \vs , \vc| \vx)$ by KL-divergence minimization. Meanwhile, the log-likelihood term for text reconstruction should be maximized. The objective for VAE is:
\begin{equation*}\textstyle
\begin{aligned}
    \calL_{\text{VAE}} = &\text{KL}( q_\theta(\vs, \vc| \vx) \Vert p(\vs,\vc)) \\ &- \bbE_{q_\theta(\vs, \vc | \vx )}[\log p_\gamma(\vx| \vs, \vc)].
\end{aligned}
\end{equation*}
We combine the VAE objective and our MI-based disentanglement term to form an end-to-end learning framework (as shown in Figure~\ref{fig:framework}). The total loss function is 
$\calL_{\text{total}} = \beta \calL_{\text{Dis}}^* + \calL_{\text{VAE}}$, 
\begin{figure}[t]
    \centering
    \includegraphics[width=\columnwidth]{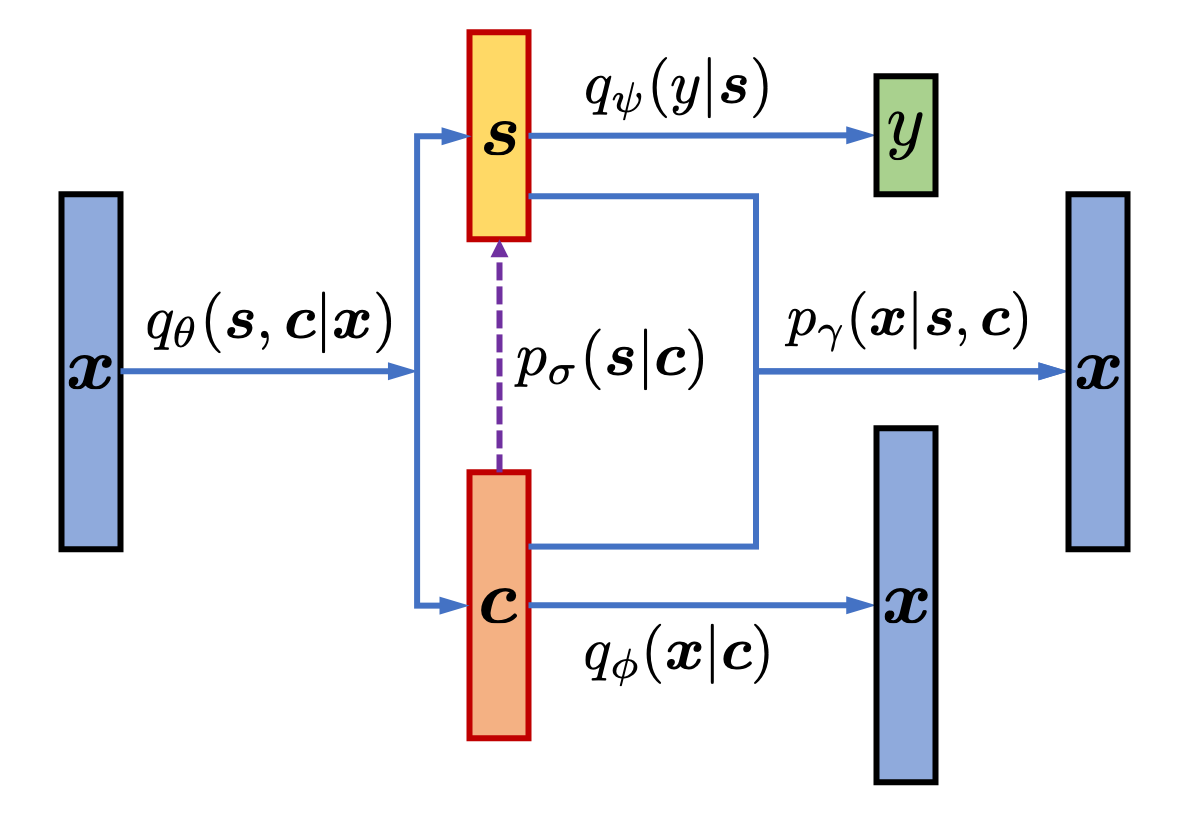}
    \caption{Proposed framework: Each sentence $\vx$ is encoded into style embedding $\vs$ and content embedding $\vc$. The style embedding $\vs$ goes through a classifier $q_\psi(y|\vs)$ to predict the style label $y$; the content embedding $\vc$ is used to reconstruct $\vx$. An auxiliary network $p_\sigma(\vs| \vc)$ helps disentangle the style and content embeddings. The decoder $p_\gamma(\vx| \vs, \vc)$ generates sentences based on the combination of $\vs$ and $\vc$. }
    \label{fig:framework}
\end{figure}
where $\calL^*_{\text{Dis}}$ replaces $\MI(\vs; \vc)$ in $\bar{\calL}_{\text{Dis}}$ (Eq.~\eqref{eq:bar_L_dis}) with our MI upper bound $\hat{\MI}(\vs; \vc)$ from Eq.~\eqref{eq:mi-upper-bound}; $\beta > 0$ is a hyper-parameter re-weighting DRL and VAE terms. 
We call this final framework Information-theoretic  Disentangled text Embedding Learning (IDEL). 

 \section{Related Work}
\subsection{Disentangled Representation Learning}
Disentangled representation learning (DRL) can be classified into two categories: unsupervised  disentangling and supervised disentangling. Unsupervised disentangling methods focus on adding constraints on the embedding space to enforce that each dimension of the space be as independent as possible~\citep{burgess2018understanding,chen2018isolating}. However, \citet{locatello2019challenging} challenge the effectiveness of unsupervised disentangling  without  any  induced bias from data or supervision. For supervised disentangling, supervision is always provided on different parts of disentangled representations. However, for text representation learning, supervised information can typically be provided only for the style embeddings (\textit{e.g.} sentiment or personality labels), making the task much more challenging. \citet{john2018disentangled} tried to alleviate this issue by manually removing sentiment-related words from a sentence. In contrast, our model is trained in an end-to-end manner without manually adding any supervision on the content embeddings.

\subsection{Mutual Information Estimation}
Mutual information (MI) is a fundamental measurement of the dependence between two random variables. MI has been applied to a wide range of tasks in machine learning, including generative modeling~\citep{chen2016infogan}, the information bottleneck~\citep{tishby2000information}, and domain adaptation~\citep{gholami2020unsupervised}. In our proposed method, we utilize MI to measure the dependence between content and style embedding. By minimizing the MI, the learned content and style representations are explicitly disentangled. 

However, the exact value of MI is hard to calculate, especially for high-dimensional embedding vectors~\citep{poole2019variational}. To approximate MI, most previous work focuses on lower-bound estimations~\citep{chen2016infogan,belghazi2018mutual,poole2019variational}, which are not applicable to MI minimization tasks.  \citet{poole2019variational} propose a leave-one-out upper bound of MI; however it is not numerically stable in practice. 
Inspired by these observations, we introduce a novel MI upper bound for disentangled representation learning, which stably minimizes the correlation between content and style embedding in a principled manner.

\section{Experiments}

\subsection{Datasets}
We conduct experiments to evaluate our models on the following real-world datasets:

     \textbf{Yelp Reviews:} The Yelp dataset contains online service reviews with associated rating scores.
    We follow the pre-processing from \citet{shen2017style} for a fair comparison. The resulting dataset includes 250,000 positive review sentences and 350,000 negative review sentences.
    
 \textbf{Personality Captioning:}
    Personality Captioning dataset~\citep{shuster2019engaging} collects captions of images which are written according to 215 different personality traits. These traits can be divided into three categories: \textit{positive}, \textit{neutral}, and \textit{negative}. We  select sentences from \textit{positive} and \textit{negative} classes for evaluation.
    
\subsection{Experimental Setup}
 We build the sentence encoder $q_\theta(\vs, \vc | \vx)$ with a one-layer bi-directional LSTM plus a multi-head attention mechanism. The style classifier $q_\psi(y|\vs)$ is parameterized by a single fully-connected network with the softmax activation. The content-based decoder $q_\phi(\vx | \vc)$ is a one-layer uni-directional LSTM appended with a linear layer with vocabulary size output, outputting the predicted probability of the next words. The conditional distribution approximation  $p_\sigma(\vs | \vc)$ is represented by a two-layer fully-connected network with ReLU activation. The generator $p_\gamma(\vx | \vs, \vc)$ is built by a two-layer uni-directional LSTM plus a linear projection with output dimension equal to the vocabulary size, providing the next-word prediction based on previous sentence information and the current word.

\begin{figure}[t]
    \centering
    \includegraphics[width = 0.95\columnwidth]{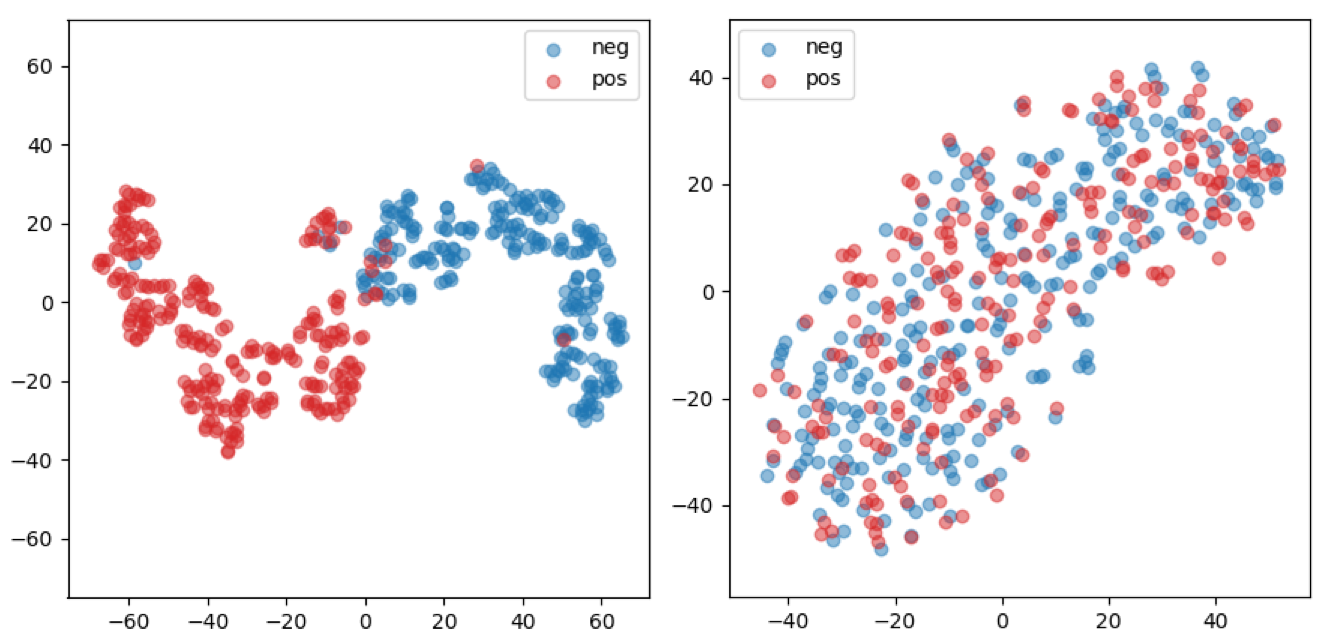}
    \caption{ Latent spaces t-SNE plots of IDEL on Yelp.}
    \label{fig:tsne_mi}
\end{figure}
\begin{figure}[t]
    \centering
    \includegraphics[width = 0.95\columnwidth]{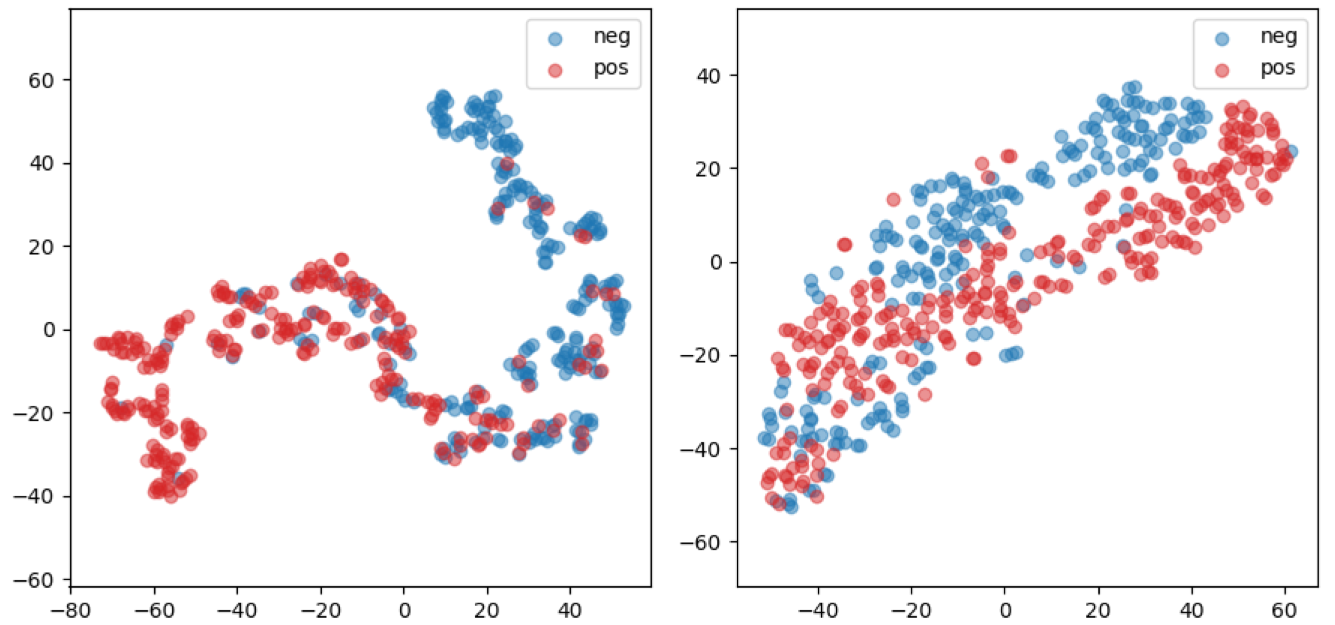}
    \caption{t-SNE plots of IDEL$^-$ without $\hat{\MI}(\vs; \vc)$.}
    \label{fig:tsne_no_mi}
\end{figure}

We initialize and  fix our word embeddings by the 300-dimensional pre-trained GloVe vectors~\citep{pennington2014glove}. The style embedding dimension is set to 32  and the content embedding dimension is 512.   We use a standard multivariate normal distribution as the prior of the latent spaces. We train the model with the Adam optimizer \citep{Adam} with initial learning rate of $5 \times 10^{-5}$. The batch size is equal to 128.

\subsection{Embedding Disentanglement Quality}\label{sec:DRL_latent_experiment}
We first examine the disentangling quality  of learned latent embeddings, primarily studying the latent spaces of IDEL on the Yelp dataset.

\textbf{Latent Space Visualization:} We randomly select 1,000 sentences from the Yelp testing set and visualize their latent embeddings  in Figure~\ref{fig:tsne_mi}, via t-SNE plots \citep{tSNE}. The blue and red points respectively represent the positive and negative sentences.  The left side of the figure shows the style embedding space, which is well separated into two parts with different colors. It supports the claim that our model learns a semantically meaningful style embedding space.  The right side of the figure is the content embedding space, which cannot be distinguished by the style labels (different colors). The lack of difference in the pattern of content embedding also provides evidence that our content embeddings have little correlation with the style labels.

For an ablation study, we train another IDEL model under  the same setup, while removing our MI upper bound $\hat{\MI}(\vs; \vc)$. We call this model IDEL$^-$ in the following experiments.  We encode the same sentences used in Figure~\ref{fig:tsne_mi}, and display the corresponding embeddings in Figure~\ref{fig:tsne_no_mi}. Compared with results from the original IDEL, the style embedding space (left in Figure~\ref{fig:tsne_no_mi}) is not separated in a clean manner. On the other hand, the positive and negative embeddings become distinguishable in the content embedding space. The difference between Figures~\ref{fig:tsne_mi} and \ref{fig:tsne_no_mi} indicates the disentangling effectiveness of our MI upper bound $\hat{\MI}(\vs; \vc)$.



\textbf{Label-Embedding Correlation:} 
Besides visualization, we also numerically analyze the correlation between latent embeddings and style labels. 
Inspired by the statistical two-sample test~\citep{gretton2012kernel}, we use the sample-based divergence between the positive embedding distribution $p(\vc | y = 1)$ and the negative embedding distribution $p(\vc |y = 0)$ as a measurement of label-embedding correlation.  
We consider four divergences: Mean Absolute Deviation (MAD)~\citep{geary1935ratio}, Energy Distance (ED)~\citep{sejdinovic2013equivalence}, Maximum Mean Discrepancy (MMD)~\citep{gretton2012kernel}, and Wasserstein distance (WD)~\citep{ramdas2017wasserstein}. For a fair comparison, we re-implement previous text embedding methods and set their content embedding dimension to 512 and the style embedding dimension to 32 (if applicable). Details about the divergences and embedding processing are shown in the Supplementary Material. 

\begin{table*}[htbp]
  \centering
  \resizebox{\textwidth}{!}{
    \begin{tabular}{l|ccc|cccc|ccc|cccc}
        \toprule[1.2pt]
        & \multicolumn{7}{c|}{\textbf{Yelp Dataset}} & \multicolumn{7}{c}{\textbf{Personality Captioning Dataset}}\\
         \hline
          & \multicolumn{3}{c|}{Conditional Generation} & \multicolumn{4}{c|}{Style Transfer} &  \multicolumn{3}{c|}{Conditional Generation} & \multicolumn{4}{c}{Style Transfer} \\
    \hline
          & \textbf{ACC} & \textbf{BLEU} & \textbf{GM} &  \textbf{ACC} & \textbf{BLEU} & \textbf{S-BLEU} & \textbf{GM} & \textbf{ACC} & \textbf{BLEU} & \textbf{GM} &  \textbf{ACC} & \textbf{BLEU} & \textbf{S-BLEU} & \textbf{GM} \\
    \hline
    \textbf{CtrlGen} & 82.5 & 20.8 & 41.4  & 83.4 & 19.4 & 31.4 & 37.0 & 73.6 & 18.9 & 37.0  & 73.3 & 18.9 & 30.0 & 34.6 \\
    \textbf{CAAE}    & 78.9 & 19.7 & 39.4  & 79.3 & 18.5 & 28.2 &34.6& 72.2 & 19.5 & 37.5  & 72.1 & 18.3 & 27.4 & 33.1 \\
    \textbf{ARAE}    & 78.3 & \textbf{23.1} & 42.4  & 78.5 & 21.3 & 32.5 & 37.9   & 72.8 & \textbf{22.5} & 40.4 & 71.5 & 20.4 & 31.6 & 35.8 \\
    \textbf{BT}      & 81.4 & 20.2 & 40.5  & \textbf{86.3} & 24.1 & \textbf{35.6} & \textbf{41.9}  &  74.1 & 21.0 & 39.4  & \textbf{75.9} & 23.1 & 34.2 & 39.1  \\
    \textbf{DRLST}   & 83.7 & 22.8 &  43.7  & 85.0 & 23.9 & 34.9 & 41.4 & 74.9 & {22.0} & {40.5}  & 75.7 & 21.9 & 33.8 & 38.3\\
    \hline
    \textbf{IDEL$^-$} & 78.1 & 20.3& 39.8  & 79.1& 20.1 & 27.5 & 35.1 & 72.0 & 19.7& 37.7  & 72.4 & 19.7 & 27.1 & 33.8\\
    \textbf{IDEL}     & \textbf{83.9} & 23.0 & \textbf{43.9}  & 85.7 & \textbf{24.3} & 35.2 & \textbf{41.9}  & \textbf{75.1} & 22.3& \textbf{40.9}  & 75.6 & \textbf{23.3} & \textbf{34.6} & \textbf{39.4}\\
  \bottomrule[1.2pt]
    \end{tabular}}
      \caption{Performance comparison of text DRL models. For conditional generation, the GM scores are calculated over ACC  and BLEU. For style transfer, the GMs are calculated over ACC, BLEU, S-BLEU(self-BLEU).}
  \label{tab:generation_results}%
\end{table*}%

From Table~\ref{tab:content_emb_dist}, the proposed IDEL achieves the lowest divergences between positive and negative \textit{content} embeddings compared with CtrlGen~\citep{hu2017toward}, CAAE~\citep{shen2017style}, ARAE~\citep{zhao2017adversarially}, BackTranslation (BT)~\citep{lample2018multipleattribute}, and DRLST~\citep{john2018disentangled}, indicating our model better disentangles the content embeddings from the style labels. For \textit{style} embeddings, we compare IDEL with DRLST, the only prior method that infers the text style embeddings. Table~\ref{tab:style_emb_dist} shows a larger distribution gap between  positive and negative style embeddings with IDEL than with DRLST, which demonstrates the proposed IDEL has better style information expression in the style embedding space. The comparison between IDEL and IDEL$^-$ supports the effectiveness of our MI upper bound minimization.

\begin{table}[t]  \footnotesize 
	\centering
	\scalebox{0.85}{
	\begin{tabular}{lccccc}
 		\toprule[1.2pt]
 
		\textbf{Method} & \textbf{MAD} &  	\textbf{ED} & \textbf{WD}  & 	\textbf{MMD}\\
		\midrule
		 \textbf{CtrlGen}      & 0.261 &0.105 & 0.311 &0.063   \\
        \textbf{CAAE}           & 0.285 &0.112 & 0.306 & 0.078         \\
        \textbf{ARAE}           & 0.194 & 0.050 & 0.248 & 0.042 \\
        \textbf{BT}& 0.211 & 0.053& 0.269 &0.049\\
        \textbf{DRLST}          &  0.181& 0.048& 0.215 &0.031 \\
    \midrule
      \textbf{IDEL$^-$}         &  0.217 & 0.077 & 0.293 & 0.051 \\
        \textbf{IDEL} & \textbf{0.063} & \textbf{0.015}& \textbf{0.084} &\textbf{0.010} \\
        \bottomrule[1.2pt]
	\end{tabular}}
	\caption{Sample divergences between positive and negative \textit{content} embeddings. }
	\label{tab:content_emb_dist}
\end{table}
\begin{table}[t]  \footnotesize 
	\centering
	\scalebox{0.85}{
	\begin{tabular}{lccccc}
 		\toprule[1.2pt]
 
		\textbf{Method} & \textbf{MAD} &  	\textbf{ED} & \textbf{WD}  & 	\textbf{MMD}\\
\midrule
        \textbf{DRLST}          &  1.024& 0.503& 1.375 &0.286 \\
        \textbf{IDEL$^-$}   &  0.996  & 0.489  & 1.124 & 0.251 \\
        \textbf{IDEL} & \textbf{1.167} & \textbf{0.583}& \textbf{1.392} &\textbf{0.302} \\
        \bottomrule[1.2pt]
	\end{tabular}}
	\caption{Sample divergences between positive and negative \textit{style} embeddings. }
	\label{tab:style_emb_dist}
\end{table}

\subsection{Embedding Representation Quality}
To show the representation ability of  IDEL, we conduct experiments on two text-generation tasks: style transfer and conditional generation. 

For style transfer, we encode two sentences into a disentangled representation, and then combine the style embedding from one sentence and the content embedding from another to generate a new sentence via the generator $p_\gamma(\vx | \vs, \vc)$. For conditional generation, we set one of the style or content embeddings to be fixed and sample the other part from the latent prior distribution, and then use the combination to generate text. Since most previous work only embedded the content information, for fair comparison,  we mainly focus on fixing style and sampling context embeddings under the conditional generation setup.

To measure generation quality for both tasks, we test the following metrics (more specific description is provided in the Supplementary Material).

\textbf{Style Preservation:} Following previous work~\citep{hu2017toward,shen2017style,john2018disentangled}, we pre-train a style classifier and use it to test whether a generated sentence can be categorized into the correct target style class.

\textbf{Content Preservation:} For style transfer, we measure whether a generation preserves the content information from the original sentence by the self-BLEU score~\citep{zhang2019text,ruiyi2020improving}. The self-BLEU is calculated between one original sentence and its style-transferred sentence.

\textbf{Generation Quality:} To measure the generation quality, we calculate the corpus-level BLEU score~\citep{papineni2002bleu} between a generated sentence and the testing data corpus.

\textbf{Geometric Mean:} We use the geometric mean (GM)~\citep{john2018disentangled} of the above metrics to obtain an overall evaluation metric of representiveness of DRL models.

\begin{table*}[htbp]
  \centering
  \resizebox{\textwidth}{!}{
    \begin{tabular}{l|l|l}
        \toprule[1.8pt]
           \textbf{Content Source} & \textbf{Style Source} & \textbf{Transferred Result}  \\
  \midrule[1.2pt]

          I \textbf{enjoy} it thoroughly! & never before had a \textbf{bad} experience at the habit until tonight. & I \textbf{dislike} it thoroughly.
      \\
      \hline
      
      quality is \textbf{just so so}. &  & quality is so \textbf{bad}. \\
      \hline
       I am so \textbf{grateful}.  & &  I am so \textbf{disgusted}. \\
   
    \midrule[1.2pt]
      
     never before had a \textbf{bad} experience at the habit until tonight. &  I am so \textbf{grateful}. & never had a service that was \textbf{enjoyable} experience tonight. \\
      \hline
      
     & quality is \textbf{just so so}.  & never had a \textbf{unimpressed} experience until tonight. \\
      \hline
      &quality of food is \textbf{fantastic}.   & never had \textbf{awesome} routine until tonight. \\
      \midrule[1.2pt]
     I am so \textbf{disappointed} with palm today.  & we were both so \textbf{impressed}.  & I am so \textbf{impressed} with palm again. \\
     \hline
      & quality of food is \textbf{fantastic} . &  I am \textbf{good} with palm today. \\
      \hline
      & never before had a \textbf{bad} experience at the habit until tonight. & I am so \textbf{disgusted} with palm today. \\
  \bottomrule[1.8pt]
    \end{tabular}}
      \caption{Examples of text style transfer on Yelp dataset.  The style-related words are bold.}
  \label{tab:sentence_examples}%
\end{table*}%

We compare our IDEL with previous state-of-the-art methods on Yelp and Personality Captioning datasets, as shown in Table~\ref{tab:generation_results}. The references to the other models are mentioned in Section~\ref{sec:DRL_latent_experiment}.  Note that the original BackTranslation (BT) method~\citep{lample2018multipleattribute} is a Auto-Encoder framework, that is not able to do conditional generation. To compare with BT fairly, we add a standard Gaussian prior in its latent space to make it a variational auto-encoder model.

From the results in Table~\ref{tab:generation_results}, ARAE performs well  on  the conditional generation. Compared to ARAE, our model performance is slightly lower on content preservation (BLEU). In contrast, the style classification score of IDEL has a large margin above that of ARAE.  The BackTranslation (BT) has a better performance on style transfer tasks, especially on the Yelp dataset. Our IDEL has a lower style classification accuracy (ACC) than BT on the style transfer task. However, IDEL achieves high BLEU on style transfer, which leads to a high overall GM score on the Personality-Captioning dataset. On the Yelp dataset, IDEL also has a competitive GM score compared with BT. The experiments show a clear trade-off between style preservation and content preservation, in which our IDEL learns more representative disentangled representation and leads to a better balance.

Besides the automatic evaluation metrics mentioned above, we further test our disentangled representation effectiveness by human evaluation. Due to the limitation of manual effort, we only evaluate the style transfer performance on Yelp datasets. The generated sentences are manually evaluated on style accuracy (SA), content preservation (CP), and sentence fluency (SF). The CP and SF scores are between 0 to 5. Details are provided in the Supplementary Material. Our method achieves better style and content preservation, with a little performance sacrifice on sentence fluency. 
\begin{table}[t]
  \centering
 
  \scalebox{0.76}{
    \begin{tabular}{l|cccc}
        \toprule[1.2pt]
          & \textbf{SA} & \textbf{CP} & \textbf{SF} & \textbf{GM}  \\
    \hline
   \textbf{CtrlGen} & 71.2 (3.56) & 3.25 & 3.12 & 3.30 \\
   \textbf{CAAE}    & 63.1 (3.16) & 2.83 & 3.06 & 3.01 \\
   \textbf{ARAE}    & 68.0 (3.40) & \textbf{3.44} & 3.09 & 3.31  \\
    \textbf{IDEL}   & \textbf{73.7 (3.69)} & 3.39 & \textbf{3.21} & \textbf{3.42} \\
  \bottomrule[1.2pt]
    \end{tabular}}
    \vspace{-1mm}
      \caption{Manual evaluation for style transfer on Yelp. The style accuracy (SA) scores are scaled in range $[0,5]$ for compatible calculation of geometric mean (GM).}
  \label{tab:addlabel}%
\end{table}%
\begin{table}[t]
\centering
  \scalebox{0.75}{
    \begin{tabular}{l|cccc}
        \toprule[1.2pt]
          & \textbf{ACC} & \textbf{BLEU} & \textbf{S-BLEU} & \textbf{GM}  \\
    \hline
    $\calL_{\text{VAE}}$  & 52.1 & \textbf{24.7} & 20.8 & 29.9 \\
    $\calL_{\text{VAE}} + \MI(\vs;y)$    & \textbf{86.1} & 23.3 & 16.4 & 32.0  \\
    $\calL_{\text{VAE}} + \MI (\vx; \vc)$    & 50.2 & 24.0 & \textbf{36.3} & 34.7 \\
    \textbf{IDEL$^-$} & 79.1 & 20.1 & 27.5 & 35.1 \\
    \textbf{IDEL$^*$} & 85.5 & 24.0 & 35.0 & 41.5  \\
    \textbf{IDEL} & 85.7 & 24.3 & 35.2 & \textbf{41.9}\\
  \bottomrule[1.2pt]
    \end{tabular}}
      \caption{Ablation tests for style transfer on Yelp.}
  \label{tab:Ablation}%
\end{table}%

 Table~\ref{tab:sentence_examples} shows three style transfer examples from IDEL on the Yelp dataset. The first example shows three sentences transferred with the style from a given sentence. The other two examples transfer each given sentence based on the styles of three different sentences. Our IDEL  not only transfers sentences into target sentiment classes, but also renders the sentence with more detailed style information (\textit{e.g.}, the degree of the sentiment).

In addition, we conduct an ablation study to test the influence of different objective terms in our model. We re-train the model with different training loss combinations while keeping all other setups the same. In Table~\ref{tab:generation_results}, IDEL surpasses IDEL$^-$ (without MI upper bound minimization) with a large gap, 
demonstrating the effectiveness of our proposed MI upper bound. The vanilla VAE has the best generation quality. However, its transfer style accuracy is slightly better than a random guess. When adding $\MI(\vs; y)$, the ACC score significantly improves, but the content preservation (S-BLEU) becomes worse. When adding $\MI(\vc; \vx)$, the content information is well preserved, while the ACC even decreases. By gradually adding MI terms, the model performance becomes more balanced on all the metrics, with the overall GM monotonically increasing. Additionally, we test the influence of the stochastic calculation of $R_j$ in Algorithm~1 (IDEL) with the closed form from Theorem~\ref{thm:upper-bound} (IDEL$^*$). The stochastic  IDEL not only accelerates the training but also  gains a performance improvement relative to IDEL$^*$.

\section{Conclusions}
We have proposed a novel information-theoretic disentangled text representation learning framework. Following the theoretical guidance from  information theory, our method separates the textual information into independent spaces, constituting style and content representations. A sample-based mutual information upper bound is derived to help reduce the dependence between embedding spaces. Concurrently, the original text information is well preserved by maximizing the mutual information between input sentences and latent representations. In experiments, we introduce several two-sample test statistics to measure label-embedding correlation. The proposed model achieves competitive performance compared with previous methods on both conditional generation and style transfer. For future work, our model can be extended to disentangled representation learning with non-categorical style labels, and applied to zero-shot style transfer with newly-coming unseen styles.

\subsection*{Acknowledgements} 
 This work was supported by NEC Labs America, and was conducted while the first author was doing an internship at NEC Labs America.
\bibliography{acl2020}
\bibliographystyle{acl_natbib}

\newpage
\appendix
\onecolumn
\section{Proofs of Theorems}\label{sec:supplemental}

\begin{proof}[Proof of Theorem~\ref{thm:upper-bound}]
First, we show that 
\begin{equation}\label{eq:club}
    \bbE_{p(\vs,\vc)} [\log p(\vs|\vc)] - \bbE_{p(\vs)p(\vc)}[\log p(\vs|\vc)] \geq \MI(\vs;\vc).
\end{equation}
Calculate the gap $\Delta$ between the left-hand  side and right-hand side of  Eq.~\eqref{eq:club}:
\begin{align*}
    \Delta =& \bbE_{p(\vs,\vc)} [\log p(\vs|\vc)] - \bbE_{p(\vs)p(\vc)}[\log p(\vs|\vc)] - \MI(\vs;\vc)\\
    =& \bbE_{p(\vs,\vc)} [\log p(\vs|\vc)] - \bbE_{p(\vs)p(\vc)}[\log p(\vs|\vc)] - \bbE_{p(\vs,\vc)}\left[\log p(\vs|\vc) - \log p(\vs)\right] \\
    =& \bbE_{p(\vs,\vc)}[\log p(\vs)] - \bbE_{p(\vs)}\bbE_{p(\vc)} [\log p(\vs|\vc)]\\
    =& \bbE_{p(\vs)}\left[\log p(\vs) - \bbE_{p(\vc)}[\log(p(\vs|\vc)]\right] \\
    = & \bbE_{p(\vs)} \left[\log\left(\bbE_{p(\vc)}[p(\vs|\vc)] \right) - \bbE_{p(\vc)}[\log p(\vs|\vc)] \right] \geq 0. & \text{(Jensen's Inequality)}
\end{align*}
Therefore, the inequality in Eq.~\eqref{eq:club} holds.

Given sample pairs $\{(\vs_j, \vc_j)\}_{j=1}^M \sim p(\vs, \vc)$,  the left-hand side of Eq.~\eqref{eq:club} has an unbiased estimation:
\begin{align*}
&\frac{1}{M} \sum_{j=1}^M \bbE_{(\vs_j,\vc_j) \sim p(\vs,\vc)}\left[\log p(\vs_j| \vc_j) \right] - \frac{1}{M^2}\sum_{j=1}^M \sum_{k=1}^M \bbE_{\vs_j \sim p(\vs)} \bbE_{\vc_k \sim p(\vc)} \left[\log p(\vs_j|\vc_k)\right] \\
= &\bbE\left[ \frac{1}{M}\sum_{j=1}^M \left[\log p(\vs_j|\vc_j) - \frac{1}{M}\sum_{k=1}^M \log p(\vs_j | \vc_k) \right]\right] = \bbE\left[\frac{1}{M} \sum_{j=1}^M R_j\right],
\end{align*}
which is what we claim in Theorem 3.1.
\end{proof}

\begin{proof}[Proof of Lower Bounds in Eq.~\eqref{eq:lower-bound-in-obj}]

\begin{align*}
    \MI(\vc; \vx) &= \mathbb{E}_{p(\vx, \vc)} [\log p(\vx | \vc) - \log p(\vx)] 
    = \Enpy(\vx) + \mathbb{E}_{p(\vx, \vc)} [\log p(\vx | \vc)] \\
    & = \Enpy(\vx) + \bbE_{p(\vx,\vc)} [\log p(\vx|\vc) - \log q_\phi(\vx|\vc) + \log q_\phi(\vx|\vc)] \\
    & = \Enpy(\vx) + \bbE_{p(\vx,\vc)}[\log p(\vx| \vc) - \log q_\phi(\vx| \vc)] + \bbE_{p(\vx,\vc)}[\log q_\phi(\vx|\vc)] \\
    & = \Enpy(\vx) + \KL(p(\vx|\vc) \Vert q_\phi(\vx| \vc)) + \bbE_{p(\vx,\vc)}[\log q_\phi(\vx|\vc)] \\
& \geq H(\vx) + \mathbb{E}_{p(\vx, \vc)} [\log q(\vx | \vc)].
\end{align*}
The inequality is based on the fact that the KL-divergence is always non-negative.
The lower bound for $\MI(\vs;y)$ can be also derived in the similar way.
\end{proof}

\section{Detailed Experimental Setups}
 We set the dimension of style embedding to be smaller than the content embedding, because the content carries more information than the style of sentences. The hyper-parameter $\beta$ in our loss function is a formal expression of re-weighting the two objectives of disentanglement and autoencoding. In practice, we vary it from 0 to 1 with step 0.1 during the first 10 training epochs. At the beginning of the training, the output latent embeddings are not representative enough. Therefore, we choose a small weight on the disentanglement  term to avoid obstructing the learning of representative embeddings. After the latent embedding is sufficiently trained, which can successfully reconstruct the input sentences, we slowly enlarge $\beta$ for the disentanglement. After $\beta$ reaches 1, we fix it until all the training epochs are finished. 

\section{Sample-based Embedding Divergences}

In this section we introduce the implementation details of the calculation about label-embedding correlation. 
As mentioned in Section~5.4 , the distribution divergence between $p(\vc | y=0)$ and $p(\vc | y =1)$ measures the correlation between content embeddings and style labels.  Assume $\vc_1^{(0)}, \vc_2^{(0)}, \dots, \vc_{N_0}^{(0)} \sim p(\vc| y = 0)$, and $ \vc_1^{(1)}, \vc_2^{(1)}, \dots, \vc_{N_1}^{(1)}\sim p(\vc| y = 1)$, then the four metrics MAD, ED, WD, MMD are calculated based on the two groups of samples.
  With a ground distance $d(\cdot, \cdot)$, the implementaion of the above four metrics are demonstrated in following:
 \begin{equation}
    D_\text{MAD} = d(\frac{1}{N_0} \sum_{i=1}^{N_0} \vc_{i}^{(0)},
    \frac{1}{N_1}  \sum_{j=1}^{N_1} \vc_{j}^{(1)}).
  \end{equation}

\begin{equation}
    D_\text{ED} =   \frac{2}{N_0 N_1} \sum_{i=1}^{N_0} \sum_{j=1}^{N_1} d(\vc_{i}^{(0)}, \vc_{j}^{(1)}) - \frac{1}{N_0^2} \sum_{i=1}^{N_0} \sum_{j=1}^{N_0} d(\vc_{i}^{(0)}, \vc_{j}^{(0)}) -  \frac{1}{N_1^2} \sum_{i=1}^{N_1} \sum_{j=1}^{N_1} d(\vc_{i}^{(1)}, \vc_{j}^{(1)})
    \end{equation}
    
 \begin{equation}
       D_\text{WD} = \min_{p_{ij}} \sum_{i=1}^{N_0} \sum_{j=1}^{N_1} p_{ij} \ d(\vc_{i}^{(0)}, \vc_{j}^{(1)})  \ \ \ 
      s.t.   \sum_{i=1}^{N_0} p_{ij} = \frac{1}{N_1}, \ \ \ \sum_{j=1}^{N_1} p_{ij} = \frac{1}{N_0}. 
\end{equation}

\begin{equation}
    D_\text{MMD} = \frac{1}{N_0^2} \sum_{i=1}^{N_0} \sum_{j=1}^{N_0} K(\vc_{i}^{(0)}, \vc_{j}^{(0)}) +  \frac{1}{N_1^2} \sum_{i=1}^{N_1} \sum_{j=1}^{N_1} K(\vc_{i}^{(1)}, \vc_{j}^{(1)}) -  \frac{2}{N_0 N_1} \sum_{i=1}^{N_0} \sum_{j=1}^{N_1} K(\vc_{i}^{(0)}, \vc_{j}^{(1)}),
    \end{equation}
    where $K(\cdot, \cdot)$ is a kernel function. Here we choose $K(\cdot, \cdot)$ from RBF kernel family with bandwidth $w= 1$. 
    
   For style embedding,  the calculation formats are the same as in above equations. The style embeddings and content embeddings have different dimensions, which leads to the ground metric $d(\cdot, \cdot)$  inconsistent. Therefore, instead of using Euclidean distance, we use the cosine distance as the ground metric.
\section{Details in Representation Quality Evaluation}
For style preservation, we pretrain a style classifier on each dataset. The style classifier is built by a one-layer LSTM appended with a multi-head attention layer. The number of the attention head is set to 6. The classifiers reach 95\% prediction accuracy on Yelp and 93\% prediction accuracy on Personality-Captioning. We input transferred sentences into the classifier and test whether the predicted style label is the same as the target style label.

For human evaluation, we transferred 1000 sentences with randomly selected style labels. After the transferring, we ask 10  human annotators to justify the style label, content preservation and content fluency. The style label is 0 or 1 representing the positive or negative sentiment of the given sentence. The content preservation and the content fluency is scored between 0 to 5. To make the style accuracy compatible with the other two scores, we scale it into range [0,5].  If the scores from the two annotators have a difference larger than 2, the scores will not be recorded. In this way, we ensure the evaluation criteria of annotators are similar.
\end{document}